\documentclass[letterpaper, 10 pt, conference]{ieeeconf}  % Comment this line out if you need a4paper

\IEEEoverridecommandlockouts                              % This command is only needed if 
\usepackage{graphics} % for pdf, bitmapped graphics files
\usepackage{epsfig} % for postscript graphics files
\usepackage{times} % assumes new font selection scheme installed
\usepackage{amsmath} % assumes amsmath package installed
\usepackage{amssymb}  % assumes amsmath package installed
\usepackage{mathtools}
\usepackage{bm}
\usepackage{nicefrac}
\usepackage{algorithm}
\usepackage{algpseudocode}
\usepackage{makecell}
\usepackage{tikz}

\usetikzlibrary{arrows.meta,positioning,calc,fit}

\usepackage{balance}

\newtheorem{theorem}{Theorem}
\newtheorem{corollary}[theorem]{Corollary}
\newtheorem{proposition}[theorem]{Proposition}
\newtheorem{remark}[theorem]{Remark}
\usepackage[table]{xcolor}
\newcommand{\ourmethod}{ReSQ-MPPI}

\title{\LARGE \bf
Reachability-Guided Sequential Quadratic Programming-Guarded \\ Model Predictive Path Integral for Safe Nonlinear Predictive Control
}

\author{Alexandre Didier$^{1}$, Jason J. Choi$^{2}$, Namhoon Cho$^{3}$, Claire J. Tomlin$^{4}$ and Melanie N. Zeilinger$^{1}$%
\thanks{$^{1}$Institute for Dynamic Systems and Control, ETH Zurich, 8092 Zurich, Switzerland. (email: {\tt\small\{adidier, mzeilinger\}@ethz.ch})}%
\thanks{$^{2}$Electrical and Computer Engineering Department, University of California, Los Angeles,  CA 90095, USA.}%
\thanks{$^{3}$Department of Aerospace Engineering, Seoul National University, Seoul
08826, Republic of Korea.}%
\thanks{$^{4}$Department of Electrical Engineering and Computer Sciences, University of California, Berkeley, CA 94704, USA.}%
}

\begin{document}

\maketitle
\thispagestyle{empty}
\pagestyle{empty}

%%%%%%%%%%%%%%%%%%%%%%%%%%%%%%%%%%%%%%%%%%%%%%%%%%%%%%%%%%%%%%%%%%%%%%%%%%%%%%%%
\begin{abstract}
Safe robot control often requires combining long-horizon performance optimization with hard state and input constraints, but existing approaches tend to address this tradeoff partially. Sampling-based model predictive control (MPC) methods such as model predictive path integral (MPPI) are effective in handling nonlinear and nonconvex environments, yet their finite-sample rollouts and unconstrained weighted-average update can return an unsafe control. Deterministic nonlinear MPC can explicitly incorporate constraints, but its real-time safety and performance depends strongly on warm starts and local convergence. Hamilton--Jacobi reachability (HJR) provides rigorous safety certificates, but offline value-function computation remains practical only for reduced-order models. We propose \ourmethod, a reachability-informed generation--refinement architecture that combines these complementary strengths. An HJR value function computed offline for a reduced-order model guides online MPPI sampling toward safe, promising trajectory candidates. The MPPI solution is then refined by a small number of sequential quadratic programming (SQP) iterations in a full-order MPC problem. A key observation is that the standard MPPI inference step is an unconstrained weighted least-squares problem; \ourmethod\ replaces it with a constrained MPC refinement that recovers the MPPI update when it is feasible and minimally modifies it otherwise. Simulations in cluttered navigation and autonomous racing environments demonstrate that \ourmethod\ improves safety and performance over standalone MPPI, MPC, and reachability-filtered sampling-based control baselines.
\end{abstract}

%%%%%%%%%%%%%%%%%%%%%%%%%%%%%%%%%%%%%%%%%%%%%%%%%%%%%%%%%%%%%%%%%%%%%%%%%%%%%%%%
\section{Introduction}

% Model Predictive Control (MPC) is an effective control framework for complex decisions of various robotics system, such as whole-body control and locomotion, manipulation, and autonomous driving. Its natural handling of safety constraints such as collision avoidance as part of the constraints in the optimization, make it appealing for various tasks associated with complex safety constraints. However, one of the crucial challenges for systems with complex dynamics is the complexity of the nonlinear optimization it inherently entails to solve.
Model predictive control (MPC) has become a powerful framework for solving complex decision-making problems in robotics, including whole-body control and locomotion \cite{di2018dynamic}, mobile manipulation \cite{pankert2020perceptive}, and autonomous driving \cite{williams2018information}. A key advantage of MPC is its ability to naturally incorporate safety requirements, such as collision avoidance and actuator limits, as constraints within the optimization problem, making it well suited for safety-critical robotic applications. However, one of the crucial challenges with MPC for systems with complex dynamics is the complexity of nonlinear optimization that it entails inherently to be solved online.

\textit{Optimization-based methods}, particularly those based on sequential quadratic programming (SQP), have emerged as promising approaches for efficiently solving nonlinear MPC problems \cite{rawlings2017, verschueren2022acados}. However, these methods remain susceptible to infeasibility and rely heavily on a good initialization to achieve rapid convergence. In practice, the optimization is typically warm-started using the solution from the previous control loop. When the solver fails to find a feasible solution within the limited number of optimization iterations in real-time, the resulting control input may violate safety constraints and compromise system safety.

On the other hand, \textit{sampling-based MPC} has emerged as an effective framework for approximately solving nonlinear optimization problems. Model predictive path integral (MPPI) and other variational inference MPC frameworks pose optimal control as a variational inference of the distribution of optimal control trajectories \cite{rawlik2012stochastic}. While these methods can effectively handle complex nonlinear dynamics and constraints within the finite prediction horizon, they lack principled mechanisms for ensuring long-term safety, such as recursive feasibility guarantees. Furthermore, as the system dimensionality increases and the feasible region becomes increasingly complex and nonconvex, a finite set of sampled trajectories may fail to contain any feasible solution. % under which the controller may fail to produce a safe control action. 
Finally, another key limitation of sampling-based MPC for safety is that, although individual sampled trajectories may each satisfy the safety constraints, the final control solution obtained by aggregating these samples can still violate safety during the inference step \cite{borquez2025dualguard}.

Another school of control methods for safety assurance is to employ \textit{safety certificate functions} \cite{wang2026handbook}, typically computed offline. These methods verify a control invariant set and subsequently synthesize an online feedback controller that guarantees the system remains within this set, thereby satisfying the prescribed safety constraints. Hamilton-Jacobi reachability (HJR) \cite{bansal2017hamilton} and control barrier functions (CBFs) \cite{Ames2019} are prominent methodologies for designing safety certificates. However, both approaches suffer from poor scalability with respect to system dimensionality and the complexity of the dynamics, making them most suitable for reduced-order models. For high-dimensional, full-order robotic systems, safety is often achieved through robust formulations and layered control architectures \cite{CohenARC24}; however, this approach inherently entails additional conservatism in the resulting closed-loop system due to margins addressing model errors.

\begin{figure*}[!t]
    \centering
    \includegraphics[width=0.99\textwidth]{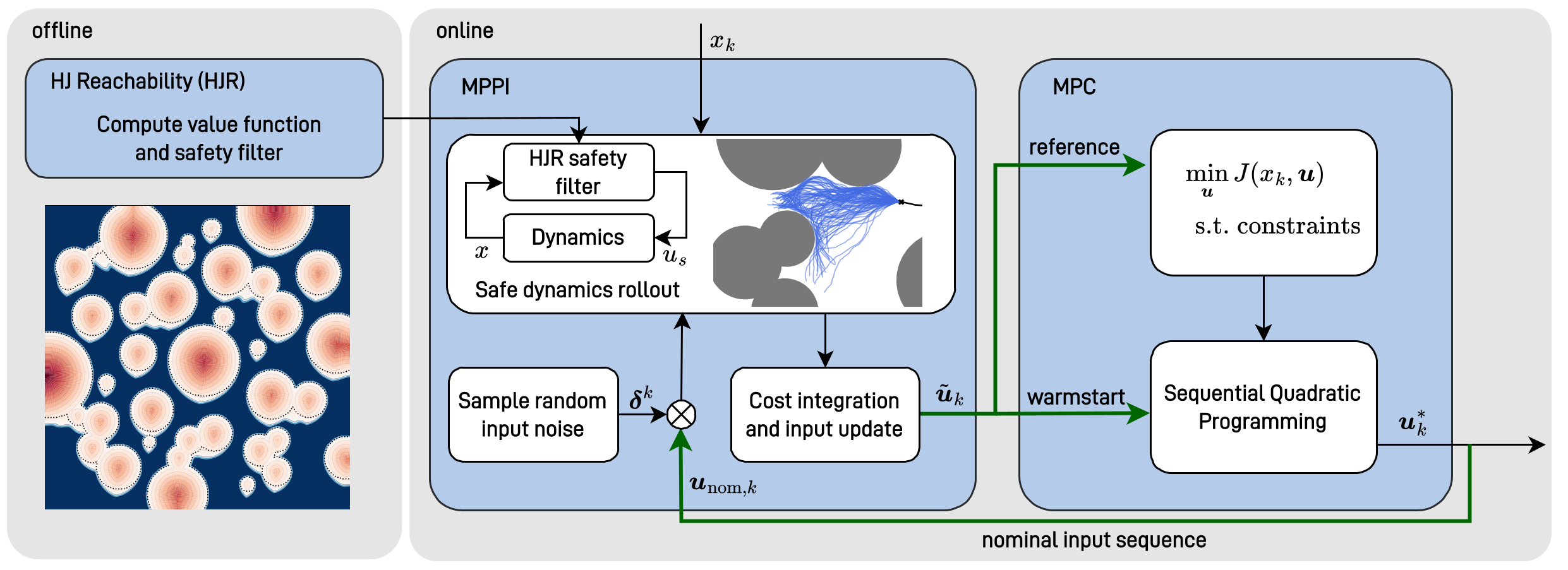}
    \vspace{-0.7em}
    \caption{Overview of \ourmethod. An offline-computed HJ reachability value function and safety filter are used in the MPPI trajectory sampling. The three proposed interactions of MPPI with sequential quadratic programming (SQP) are depicted with green arrows: 1) using the MPPI solution as a reference in the MPC optimization problem, 2) as a warmstart for the SQP solver, and 3) using the SQP optimal trajectory as a nominal input in MPPI.}
    \label{fig:blockdiagram}
    \vspace{-0.5cm}
\end{figure*}

Motivated by the limitations of existing approaches, this work proposes a new safe MPC framework, \ourmethod, that combines three key components: (1) an offline-computed HJR safety certificate based on a reduced-order model, (2) an MPPI that generates an initial solution candidate by sampling trajectories guided by the safety certificate, and (3) a small number of SQP iterations that refine the candidate solution to satisfy the constraints of the full-order system dynamics.

Our key observation is that the inference step in MPPI, which aggregates sampled control sequences into a single control solution, is an \textit{unconstrained} weighted least-squares optimization. Consequently, even if all sampled trajectories individually satisfy the safety constraints, the aggregated solution is not guaranteed to remain safe. Previous works \cite{yin2023shield, borquez2025dualguard} address this issue by introducing a post-processing safety filter that projects the aggregated control onto the safe set. In contrast, our key idea is to unify control aggregation and safety enforcement into a single optimization problem. Specifically, we formulate the inference step as a \textit{constrained} weighted least-squares problem within the SQP framework of nonlinear MPC.
Below, we detail how the proposed framework overcomes the individual limitations of existing SQP-based MPC, MPPI, and safety certificate methods.

\textit{\ourmethod~vs. SQP \& Sampling-based MPC:}
HJR-guided MPPI generates an initial solution that closely approximates the long-horizon optimal solution while accounting for safety. This warm start substantially improves the convergence and robustness of SQP. Unlike conventional MPPI, the proposed SQP formulation incorporates safety constraints directly into the control aggregation step, ensuring that the final control solution satisfies the desired safety requirements. Furthermore, the control sequence refined by SQP is used as the nominal input sequence for the MPPI at the next sampling instant, establishing a bidirectional coupling between MPPI sampling and SQP refinement. This interconnection significantly reduces the likelihood that either MPPI fails to sample a feasible trajectory or SQP fails to converge to a feasible solution within the limited number of optimization iterations.

\textit{\ourmethod~vs. HJR:} While conventional numerical HJR cannot be applied to higher-dimensional systems without neural network approximations, MPPI and MPC can impose hard safety constraints based on full-order dynamics.

\subsection{Related Works}

\paragraph{Predictive Control for Safety Assurance} MPC has long been regarded as one of the principal frameworks for safety due to its ability to explicitly incorporate state and input constraints into the online optimization. In classical MPC, safety is typically established through \emph{recursive feasibility}, ensuring that the optimization problem remains feasible at every sampling time by designing terminal constraint sets that are control invariant. However, constructing such terminal sets is often challenging, particularly for complex nonlinear systems and safety constraints. Moreover, the resulting nonlinear optimization can be computationally demanding and may fail to converge to a feasible solution in real-time. Predictive safety filters (PSF) \cite{wabersich2021predictive} have extended MPC to a supervisory safety layer that minimally modifies a nominal controller while maintaining safety through recursive feasibility. More recently, sampling-based MPC methods such as MPPI have emerged as alternatives for optimization-based MPC and PSF for safety \cite{borquez2025dualguard, feng2025bridging}. Nevertheless, because these methods rely on a finite set of sampled trajectories, they generally lack formal recursive feasibility guarantees \cite{yin2023shield}. Consequently, when feasible trajectories are not sampled or the final aggregated control input does not preserve feasibility, long-term safety is not ensured.

\paragraph{Safety Certificates + Sampling-based MPC} To improve the safety of sampling-based MPC, various approaches are proposed which integrate safety certificates into the trajectory sampling process. The methods in \cite{tao2022control, yin2023shield, yin2024chance} integrate CBFs with MPPI; however, the employed CBFs are typically simple distance-based functions that do not certify control invariance. The work of \cite{borquez2025dualguard} integrates HJR least-restrictive safety filtering with MPPI. However, the approach is limited to low-dimensional system models because of the poor scalability of HJR. To improve scalability, recent methods have proposed usage of learning-based certificates, including neural CBFs \cite{yin2025safe} and neural approximations of HJR value functions \cite{feng2025bridging}. While these approaches can be applied to higher-dimensional systems, formally verifying the safety guarantees of learned certificates remains an open challenge \cite{lin2024verification}. More recently, \cite{mizuta2026cfmmppi} uses CBFs to encourage safety during the training of a generative model that produces the proposal distribution for MPPI.

% \paragraph{Predictive Control for Safety Assurance and Safety Filtering}
% \begin{itemize}
%     \item In classical MPC - recursive feasibility as main mechanism for safety. However, hard to design terminal constraint, and for complex nonlinear systems, MPC becomes generally nonlinear optimization, and often faces infeasibility, sacrificing safety.
%     \item MPC has been extended to Predictive Safety Filtering Schemes.
%     \item Sampling MPC recently emerged as convenient safety filtering schemes, circumventing the need of solving the optimization directly. However, with sparse samples, no hard guarantee for recursive feasibility, and thus safety.
% \end{itemize}

% Safety Certificates
% \begin{itemize}
%     \item Set Invariance baked into the scalar function.
%     \item Offline computed, hard to hand-design or learn. 
%     \item Usually do not scale well with state dimension and complex nonlinear dynamics.
% \end{itemize}

% CBF - use distance functions as CBFs
% \begin{itemize}
%     \item \cite{tao2022control}
%     \item Shield MPPI \cite{yin2023shield} and its extensions \cite{yin2024chance}
% \end{itemize}

% CBF - Neural 
% \begin{itemize}
%     \item Neural CBF + Novel sampling technique \cite{yin2025safe}
%     \item CBF used to guide Conditional Flow Matching generative model training, which produce prior distribution for MPPI \cite{mizuta2026cfmmppi}, no hard safety guarantee.
% \end{itemize}

% Reachability
% \begin{itemize}
% \item \cite{borquez2025dualguard}
% \end{itemize}

% \paragraph{Misc. (Temporary)}
% \cite{reiter2025ac4mpc}

\subsection{Notation}
We denote a sequence of vectors $x_i$ as $\bm{x}\coloneqq\{x_i\}_{i=0}^N$, where $N$ is the length of the sequence. Optimization variables at time step $k$ predicting $i$ steps ahead are denoted as $x_{i|k}$ and the corresponding sequence of optimization variables is denoted as $\bm{x}_k$. An optimal solution of a given optimization problem with variables $x_{i|k}$ is denoted as $x^*_{i|k}$. %The $j$-th iterate of finding a solution at time step $k$ is denoted as $\bm{x}_{j|k}$.

\section{Preliminaries}
\subsection{Problem Formulation}
We consider discrete-time dynamical systems \vspace{-0.5em}
\begin{equation}
    x_{k+1} = f(x_k, u_k), \vspace{-0.5em}
\end{equation}
with state $x_k \in \mathbb{R}^n$, input $u_k\in\mathbb{R}^m$ and discrete-time dynamics $f:\mathbb{R}^n \times \mathbb{R}^m \rightarrow \mathbb{R}^n$. The system is subject to state and input constraints, such that $\forall k \geq 0$ it holds that
\begin{align*}
    x_k \in \mathcal{X} \coloneqq \{x\in\mathbb{R}^n \mid c_x(x) \leq 0\}, \quad u_k \in \mathcal{U},
\end{align*}
where $c_x$ is the signed distance function of the state constraint set $\mathcal{X}$. The input constraints $\mathcal{U}$ capture physical actuator limitations, while the state constraints encode safety-critical considerations such as obstacles or velocity limits. 

The aim is to ensure that the constraints of the system are satisfied during operation while minimizing a performance objective of the form \vspace{-0.75em}
\begin{equation}
    \mathcal{L}(\bm{x}, \bm{u})\coloneqq \sum_{k=0}^\infty l(x_k, u_k), \vspace{-0.5em}
\end{equation}
where $l:\mathbb{R}^n \times \mathbb{R}^m \rightarrow\mathbb{R}$ is a stage cost incurred at every time step. This results in the following optimal control problem: \vspace{-0.5em}
\begin{subequations} \label{eq:OCP}
\begin{align}
    \min_{\bm{u}_0} \; & \sum_{i=0}^\infty l(x_{i |0}, u_{i|0}) \\
    \mathrm{s.t.} \; & \forall i \geq 0, \nonumber \\
    & x_{0|0} = x_0, \\
    & x_{i+1 | 0} = f(x_{i|0}, u_{i|0}), \\
    & x_i \in \mathcal{X}, \\
    & u_i \in \mathcal{U}. \vspace{-0.5em}
\end{align} 
\end{subequations}
While such optimal control problems are generally intractable to solve, we aim to propose an approximate solution in this paper that satisfies the system constraints during operation.

\subsection{Hamilton-Jacobi Reachability} \label{sec:HJR}
We leverage Hamilton-Jacobi reachability (HJR) analysis to compute the maximal control invariant set and the safe optimal controller that maintains the system within the invariant set. Given the constraint set $\mathcal{X}=\{x\in\mathbb{R}^n \mid c_x(x)\leq 0\}$, a value function given as \vspace{-0.5em}
\begin{equation*}
    V(x) := \min_{\bm{u}\in\mathcal{U}} \max_{k\in \mathbb{N}} c_x(x_k), \vspace{-0.5em}
\end{equation*}
characterizes the maximal control invariant set in $\mathcal{X}$ as its zero-sublevel set, $\mathcal{S}:=\{x\in\mathbb{R}^n \mid V(x)\leq 0\}$ \cite{fisac2019bridging}, which we denote as the \textit{maximal safe set}. Moreover, the value function induces the \textit{safe optimal controller} $u_{\mathrm{s}}(x)$:

\;
\vspace{-0.5em}
\begin{equation*}
    u_{\mathrm{s}}(x) := \arg\min_{u\in\mathcal{U}} V(f(x, u)), \vspace{-0.5em}
\end{equation*}
which can be used to ensure that the system remains safe if $V(x)\leq 0$ \cite{borquez2025dualguard}.

\subsection{Model Predictive Path Integral Control} \label{sec:MPPI}
Model predictive path integral (MPPI) \cite{williams2017model, williams2017information} is a receding horizon approach to obtain an approximate solution to the optimal control problem in~\eqref{eq:OCP} for unconstrained systems. The optimal control problem~\eqref{eq:OCP} is approximated with an optimization problem of finite horizon $N$ through a terminal cost, i.e.,
\begin{subequations} \label{eq:MPPIOCP}
\begin{align}
    \min_{\bm{u}_0} \; &  \sum_{i=0}^{N-1} l(x_{i |k}, u_{i|k}) + m(x_{N|k}) \label{eq:MPPIOCP-obj}\\
    \mathrm{s.t.} \; & \forall i = 0,\dots,N-1 \nonumber \\
    & x_{0|k} = x_k, \\
    & x_{i+1 | k} = f(x_{i|k},u_{i|k}), 
\end{align}
\end{subequations}
where the terminal cost $m:\mathbb{R}^n\rightarrow\mathbb{R}$ aims to capture the cost-to-go of the infinite horizon tail. It is shown in \cite{williams2017model} that~\eqref{eq:MPPIOCP} can be approximately solved by rolling out $M$ random input disturbances, aggregating their cost and performing a weighted average update to the input sequences. Each randomly sampled trajectory $\tau^j_k$, consisting of a nominal input sequence $\bm{u}_{\mathrm{nom,k}}$ and random Gaussian perturbations $\delta\bm{u}_k^j$  along with their corresponding state sequence $\bm{x}_k^j$, results in a cost \vspace{-0.5em}
\begin{equation*}
    S(\tau^j_k) \coloneqq m(x_{N|k}^j) + \sum_{i=0}^{N-1} l(x_{i|k}^j, u_{i|k}^j)
\end{equation*}

This results in the following update rule
\begin{equation} \label{eq:MPPIupdate}
    \tilde{\bm{u}}(x_k) \approx \bm{u}_{\mathrm{nom,k}} + \frac{\sum_{j=1}^M \exp\big(-\nicefrac{1}{\lambda}S(\tau^j_k)\big)\delta \bm{u}^j_k}{\sum_{j=1}^M \exp\big(-\nicefrac{1}{\lambda}S(\tau^j_k)\big)}, 
\end{equation}
where $\lambda$ is a user-defined temperature that allows tuning the effective number of trajectories contributing to the weighted average. 

A crucial observation is that the MPPI update in \eqref{eq:MPPIupdate} is precisely the solution of an \textit{unconstrained} weighted least-squares problem.
\begin{proposition}
    The MPPI update \eqref{eq:MPPIupdate} is the solution of the optimization problem
    \begin{equation}
    \min_{\bm{u}_k}  \sum_{j=1}^{M} \exp\big(-\nicefrac{1}{\lambda}S(\tau^j_k)\big) \Vert \bm{u}_k - (\bm{u}_{\textup{nom, k}} + \delta \bm{u}^j_k) \Vert^2 \label{eq:LS-MPPI}.
\end{equation}
\end{proposition}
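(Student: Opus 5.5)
The plan is to show that the objective in \eqref{eq:LS-MPPI} is a strictly convex quadratic in $\bm{u}_k$ and that its unique stationary point is exactly the weighted average in \eqref{eq:MPPIupdate}. To simplify notation, set $w_j \coloneqq \exp\big(-\nicefrac{1}{\lambda}S(\tau^j_k)\big)$ for $j=1,\dots,M$. Since $\lambda>0$ and each $S(\tau^j_k)$ is finite, every $w_j$ is strictly positive, so $W\coloneqq\sum_{j=1}^M w_j>0$. We treat the whole input sequence $\bm{u}_k$ as a single stacked vector in $\mathbb{R}^{Nm}$, and take $\Vert\cdot\Vert$ to be the Euclidean norm on that vector.

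Write $J(\bm{u}_k)\coloneqq\sum_{j=1}^M w_j\Vert \bm{u}_k-\bm{v}^j\Vert^2$ with $\bm{v}^j\coloneqq\bm{u}_{\mathrm{nom},k}+\delta\bm{u}^j_k$. Its gradient and Hessian are
\begin{equation*}
\nabla J(\bm{u}_k)=2\sum_{j=1}^M w_j(\bm{u}_k-\bm{v}^j), \qquad \nabla^2 J = 2W I.
\end{equation*}
The Hessian $2WI$ is positive definite, so $J$ is strictly convex and has a unique global minimizer, characterized by $\nabla J=0$. Solving the stationarity condition gives $W\bm{u}_k=\sum_j w_j\bm{v}^j$, and therefore
\begin{equation*}
\bm{u}_k^*=\frac{\sum_j w_j(\bm{u}_{\mathrm{nom},k}+\delta\bm{u}^j_k)}{W}=\bm{u}_{\mathrm{nom},k}+\frac{\sum_j w_j\,\delta\bm{u}^j_k}{\sum_j w_j}.
\end{equation*}
The second equality holds because the weights on the nominal term sum to $W$. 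This is exactly \eqref{eq:MPPIupdate}.

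An equivalent route avoids calculus. Complete the square to obtain $J(\bm{u}_k)=W\Vert\bm{u}_k-\bar{\bm{v}}\Vert^2+\text{const}$, where $\bar{\bm{v}}=\nicefrac{1}{W}\sum_j w_j\bm{v}^j$. The constant does not depend on $\bm{u}_k$, so the minimizer is $\bar{\bm{v}}$.

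None of these steps is a real obstacle. The only points that need care are the positivity of the weights, which ensures both strict convexity and that the normalization is well defined, and the convention that the norm is taken over the stacked sequence. Under that convention the problem decouples across time steps and input components, and each coordinate simply gives a scalar weighted mean.
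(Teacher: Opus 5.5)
Your proof is correct and follows the same route as the paper: set the gradient of the weighted least-squares objective to zero and solve for $\bm{u}_k$. Your additional check that the Hessian $2WI$ is positive definite (since every weight is strictly positive) is a detail the paper leaves implicit, and it is what guarantees that the stationary point is the unique global minimizer.
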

\begin{proof}
    The optimal solution of the unconstrained optimization problem minimizing~\eqref{eq:LS-MPPI} can be derived by setting
    \begin{align*}
        0 = & \nabla_{\bm{u}_k} \sum_{j=1}^{M} \exp\big(-\nicefrac{1}{\lambda}S(\tau^j_k)\big) \Vert \bm{u}_k - (\bm{u}_{\textup{nom, k}} + \delta \bm{u}^j_k) \Vert^2 \\
        = & \; 2 \sum_{j=1}^{M} \exp\big(-\nicefrac{1}{\lambda}S(\tau^j_k)\big) (\bm{u}_k - (\bm{u}_{\textup{nom, k}} + \delta \bm{u}^j_k))
    \end{align*}
    Solving for $\bm{u}_k$ results in the unconstrained optimal solution~\eqref{eq:MPPIupdate}.
\end{proof}
As such, the final closed-loop control is inherently agnostic to system constraints and therefore lacks safety guarantees.

\subsection{HJR-guided MPPI (Dualguard-MPPI)}
\label{sec:hjr-mppi}

One way to improve the safety of MPPI is to bias the trajectory sampling process using safety certificates. Existing approaches achieve this by either augmenting the trajectory cost in \eqref{eq:MPPIOCP-obj} with barrier or value-function penalties, or by rejecting unsafe samples through safety filtering \cite{yin2023shield, borquez2025dualguard}. Neither of which guarantees closed-loop constraint satisfaction as discussed in \cite{borquez2025dualguard}; thus, we leverage the HJR-based safety filtering to generate safe trajectory samples as proposed in \cite{borquez2025dualguard}, named \textit{Dualguard-MPPI}. Specifically, for each sampled input sequence $\bm{u}_k^j$, at time step $i$, if the state at the next time step $x_{i+1|k}^j$ is not within $\mathcal{S}$, we override $u_{i|k}^j$ with the safe optimal control input $u_{\mathrm{s}}(x_{i|k}^j)$, making sure that the trajectory stays within the maximal invariant set.

The main limitation of this approach is the poor scalability of the computation of the HJR value function $V$. Due to the curse of dimensionality \cite{bansal2017hamilton}, we can only use a reduced-order model of $f$ for its computation, which results in errors in the computed maximal safe set $\mathcal{S}$ and the safe controller $u_{\mathrm{s}}$. Thus, the safety for the full-order system is compromised with this approach itself, as we will see in our results.

% Another limitation incorporating safety filtering into the MPPI update rule in is that while such filtering can ensure the closed-loop constraint satisfaction, they are typically unaware of the original primary control objective in \eqref{eq:MPPIOCP}. Thus, the safe control input $u_{\mathrm{s}}(x_{i|k}^j)$, overriding the original sampled input can result in substantial drop of performance. 

\subsection{Model Predictive Control} \label{sec:MPC}
Another receding horizon approach to approximate~\eqref{eq:OCP} is model predictive control (MPC), see \cite{rawlings2017} for an overview. In contrast to MPPI, which proposes a sampling-based solution approach to~\eqref{eq:MPPIOCP}, MPC leverages numerical optimization in order to solve constrained optimization problems of the form \vspace{-0.75em}
\begin{subequations} \label{eq:MPCOCP} 
\begin{align}
    \min_{\bm{u}_0} \; &  \sum_{i=0}^{N-1} l(x_{i |k}, u_{i|k}) + m(x_{N|k})\\
    \mathrm{s.t.} \; & \forall i = 0,\dots,N-1 \nonumber \\
    & x_{0|k} = x_k, \\
    & x_{i+1 | k} = f(x_{i|k}, u_{i|k}), \\
    & x_{i|k} \in \mathcal{X}, \\
    & u_{i|k} \in \mathcal{U}, \\
    & x_{N|k} \in \mathcal{X}_f. \label{eq:MPCterm}
\end{align}
\end{subequations}
Similarly to MPPI, the terminal cost $m(x_{N|k})$ is employed to approximate the cost to go, while an invariant terminal set $\mathcal{X}_f$, as e.g. described in Section~\ref{sec:HJR}, is used in~\eqref{eq:MPCterm} to ensure that the optimization problem~\eqref{eq:MPCOCP} is recursively feasible. 

In order to solve~\eqref{eq:MPCOCP} online in real-time, we consider sequential quadratic programming (SQP) approaches such as described in \cite{boggs1995sequential}. SQP is an iterative optimization approach where the problem~\eqref{eq:MPCOCP} is approximated as a quadratic program at the current solution iterate $\bar{\bm{x}}_{j|k}$ and $ \bar{\bm{u}}_{j|k}$. The solution of the quadratic program provides the next update step, until the iterates converge to a local KKT point. 

Due to the local quadratic approximations, the solution quality of SQP solvers heavily depends on the initial warmstart of the solution iterates $\bar{\bm{x}}_{0|k}$ and $\bar{\bm{u}}_{0|k}$. For MPC methods, the initial iterates are typically a time shifted solution obtained at the previous time step $k-1$. This can result in closed-loop solutions which are stuck in local optima for non-convex problems and may often fail to find feasible solutions. In this work, we aim to leverage the exploration capabilities of MPPI samples to resolve local optima issues.

\section{METHOD}\label{sec:method}

In this section, we introduce our method, \textbf{Reachability-guided SQP-guarded model predictive path integral (\ourmethod)} as illustrated in Figure~\ref{fig:blockdiagram}. The aim is to augment HJR-guided MPPI as described in Section~\ref{sec:hjr-mppi} with constraint satisfaction guarantees. We achieve this through a best of both worlds approach, where MPPI allows escaping local optima through its sampling-based solution approach, the potentially unsafe solution of which we then refine using SQP to achieve a high performance safe control framework for non-convex optimal control problems. We propose three interconnections between MPPI and MPC through SQP: (A) recovering the MPPI solution through a reference tracking MPC formulation, (B) using the MPPI solution as a warmstart for SQP iterations, and (C) lastly feeding back the converged SQP solution as the nominal input sequence in MPPI at the next sampling time.

\subsection{MPPI Solution via MPC}
\label{sec:ref}

We first consider the MPC formulation which recovers the optimal solution of MPPI~\eqref{eq:MPPIupdate} if it satisfies the system constraints and modifies it as little as possible if it is unsafe. Such an approach is similar to predictive safety filters as described in \cite{wabersich2021predictive}, although we consider the weighted average solution of MPPI over multiple time steps: \vspace{-0.5em}
\begin{subequations} \label{eq:LSMPC}
\begin{align}
    \min_{\bm{u}_k} & \sum_{j=1}^{M} \exp\big(-\nicefrac{1}{\lambda}S(\tau^j_k)\big) \Vert \bm{u}_k - (\bm{u}_{\textup{nom, k}} + \delta \bm{u}^j_k) \Vert^2 \label{eq:LSMPCcost}\\
    \mathrm{s.t.} \; & \forall i = 0,\dots,N-1 \nonumber \\
    & x_{0|k} = x_k, \label{eq:LSMPCinit} \\
    & x_{i+1 | k} = f(x_{i|k}, u_{i|k}), \label{eq:LSMPCdyn}\\
    & x_{i|k} \in \mathcal{X}, \label{eq:LSMPCstate}\\
    & u_{i|k} \in \mathcal{U}, \label{eq:LSMPCinput}\\
    & x_{N|k} \in \mathcal{X}_f, \label{eq:LSMPCterm}
\end{align}
\end{subequations}
The objective of the MPC is exactly identical to \eqref{eq:LS-MPPI}; however, the crucial difference is the incorporation of the constraints \eqref{eq:LSMPCstate}-\eqref{eq:LSMPCterm} for safety guarantees. If the inputs lead the system to an unsafe region, the optimal solution of~\eqref{eq:LSMPC} will track the inputs as closely as possible such that state and input constraints are satisfied. Thus, the formulation in \eqref{eq:LSMPC} can be considered a \textit{constrained version} of the weighted least-squares MPPI update rule in \eqref{eq:MPPIupdate} for safety. Clearly, if the MPC does not modify the MPPI output if it already satisfies the system constraints.

For many systems, rather than track an input sequence, it is desirable to track the resulting states instead. In such cases, instead of tracking the MPPI input sequence, we consider tracking the corresponding optimal state sequence:
\vspace{0.5em}
\begin{corollary}
    If the system dynamics $f(x_k, u_k)$ are bijective with respect to $u_k$, the optimal solution of 
    \begin{subequations} \label{eq:LSMPC_state}
\begin{align}
    \min_{\bm{u}_k} \;\; & \Vert \bm{x}_k - \tilde{\bm{x}}_k \Vert^2 \label{eq:LSMPC_state_cost}\\
    \mathrm{s.t.} \; & \forall i = 0,\dots,N-1 \nonumber \\
    & x_{0|k} = x_k, \\
    & x_{i+1 | k} = f(x_{i|k}, u_{i|k}), \\
    & x_{i|k} \in \mathcal{X}, \\
    & u_{i|k} \in \mathcal{U}, \\
    & x_{N|k} \in \mathcal{X}_f,
\end{align}
\end{subequations} 
where $\tilde{\bm{x}}_k$ denotes the state sequence corresponding to~\eqref{eq:MPPIupdate}, is given by $\tilde{\bm{u}}_k$ if $\tilde{\bm{u}}_k$ and $\tilde{\bm{x}}_k$ satisfy \textup{(\ref{eq:LSMPC}d--f)}.
\end{corollary}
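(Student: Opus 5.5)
The argument is a direct optimality argument. The cost \eqref{eq:LSMPC_state_cost} is nonnegative, and it vanishes exactly when $\bm{x}_k = \tilde{\bm{x}}_k$. So the plan has three steps: show that $\tilde{\bm{u}}_k$ is feasible for \eqref{eq:LSMPC_state}, show that it attains cost zero, and then use bijectivity to show that no other input sequence attains cost zero.

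\textbf{Feasibility and zero cost.} By definition, $\tilde{\bm{x}}_k$ is the state sequence obtained by rolling out $\tilde{\bm{u}}_k$ from $x_{0|k}=x_k$ through $x_{i+1|k}=f(x_{i|k},u_{i|k})$. Hence the pair $(\tilde{\bm{x}}_k,\tilde{\bm{u}}_k)$ satisfies the initial condition and the dynamics constraints of \eqref{eq:LSMPC_state}. By hypothesis it also satisfies the constraints corresponding to (\ref{eq:LSMPC}d--f): the state constraints, the input constraints and the terminal constraint. It is therefore feasible. Its cost is $\Vert\tilde{\bm{x}}_k-\tilde{\bm{x}}_k\Vert^2=0$, and since the objective is nonnegative, $\tilde{\bm{u}}_k$ is a global minimizer.

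\textbf{Uniqueness.} Let $\bm{u}_k$ be any optimal input sequence. Its cost must also be zero, so its state trajectory satisfies $x_{i|k}=\tilde{x}_{i|k}$ for all $i=0,\dots,N$. I then argue by induction on $i$:
\begin{itemize}
\item At each step, $f(\tilde{x}_{i|k},u_{i|k}) = \tilde{x}_{i+1|k} = f(\tilde{x}_{i|k},\tilde{u}_{i|k})$.
\item Bijectivity of $f(x,\cdot)$ in $u$ (injectivity suffices) gives $u_{i|k}=\tilde{u}_{i|k}$.
\end{itemize}
Hence $\bm{u}_k=\tilde{\bm{u}}_k$, so the optimal solution is unique and equals $\tilde{\bm{u}}_k$.

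\textbf{Where care is needed.} The only subtle point is the uniqueness step. Without injectivity in $u$, a different input could produce the same state sequence, and the minimizer would not be unique. This is exactly where the bijectivity hypothesis enters. Only injectivity is actually used; surjectivity is not needed.

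One further detail concerns the norm in \eqref{eq:LSMPC_state_cost}. I read it as the sum over $i=0,\dots,N$ of squared errors. The index $i=0$ contributes zero automatically because of the fixed initial state. The terminal index $i=N$ is what pins down $u_{N-1|k}$.
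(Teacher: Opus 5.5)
Your proof is correct and follows the same route as the paper. The paper argues that the zero-cost unconstrained minimizer $\tilde{\bm{x}}_k$ is feasible and, by bijectivity, can only be produced by $\tilde{\bm{u}}_k$, which is your feasibility/zero-cost step followed by your uniqueness step; your remarks that only injectivity of $u \mapsto f(x,u)$ is used, and that the $i=N$ term is what pins down $u_{N-1|k}$, make explicit details the paper leaves implicit.
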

\begin{proof}
    The claim follows directly from bijectivity of the dynamics, as the unconstrained minimizer of~\eqref{eq:LSMPCstate}, $\tilde{\bm{x}}_k$, can only be obtained through the input sequence $\tilde{\bm{u}}_k$ in~\eqref{eq:MPPIupdate}.
    % \ncnote{review here: check if we need bijectivity or injectivity of the map F where $\tilde{\bm{x}}_k := F\left($\tilde{\bm{u}}_k$\right)$ at $\tilde{\bm{u}}_k$ + feasibility}
\end{proof}
Thereby, we can recover the MPPI solution not just via an input matching approach, but also through a state trajectory tracking formulation.

\begin{remark}[Hierarchical Structure]
For high-dimensional systems, HJR-guided MPPI can become computationally intractable as noted in \ref{sec:hjr-mppi}. Furthermore, with increasing input dimensions, the likelihood of sampling meaningful random input sequences decreases. Our approach naturally allows decoupling the MPPI and SQP approach into a hierarchical framework. With a reduced order model,
% \begin{equation*}
%     x^{\mathrm{red}}_{k+1} = f^{\mathrm{red}}(x^{\mathrm{red}}_k, u^{\mathrm{red}}_k),
% \end{equation*}
we can use HJR-guided MPPI as a higher-level planner, providing a reference $\tilde{\bm{x}}_k^{\mathrm{red}}$. This can then be used in the tracking formulation in \eqref{eq:LSMPC_state}, allowing the online optimization to obtain a control sequence for the full-order model. We show that this hierarchical structure can provide competitive performance in an autonomous racing example in Section~\ref{sec:simulations}.    
\end{remark}

\subsection{MPPI as Warmstart for SQP} 
\label{sec:ws}
While the MPPI update rule can be recovered through a predictive optimization problem as shown in Section~\ref{sec:ref}, the suboptimality of the MPPI solution grows second-order with the input sampling uncertainty \cite{homburger2025optimality}. Thereby, the MPPI solution may lose accuracy if the samples are under large noise perturbations, or the samples are too sparse in high-dimensional state space. In order to remedy these effects, we propose refining the MPPI solution by using it as the initial iterate in a sequential quadratic formulation. By minimizing the same objective in MPPI as in the following SQP, the MPPI solution can be guided towards the closest local KKT point. The tracking cost in \eqref{eq:LSMPC_state} can then additionally be used as a regularizer to ensure local optimization of the objective in environments with multiple local optima such as, e.g., the cluttered environment simulation in Section~\ref{sec:simulations}.

\vspace{0.5em}
\begin{remark}
    In \cite{fazlyab2026model}, it is shown that the MPPI update corresponds to a preconditioned gradient step on the performance objective. By using the MPPI solution as a warmstart to an SQP, we apply a gradient step followed by SQP updates (which correspond to Gauss-Newton steps in the unconstrained setting based on the Hessian approximation) with respect to the control objective. Investigating the convergence properties of this approach is an interesting research direction for future work.
\end{remark}

\subsection{SQP Solution as MPPI Nominal Sequence}
\label{sec:nom}
The MPPI update rule in \eqref{eq:MPPIupdate} requires a nominal input sequence to which the weighted average of the randomly sampled input trajectories is added to. As this nominal input sequence biases the distribution of random samples for exploration, this nominal sequence can play a large role in the closed-loop performance of MPPI. The third element of our approach is to use the time-shifted optimal solution of the SQP as the nominal solution of MPPI at the next time step. This interaction, together with the first two elements, closes the loop between MPPI and SQP.

The overall \ourmethod\ control architecture, which is composed of the three types of interactions between HJR-guided MPPI and SQP, is illustrated in Figure~\ref{fig:blockdiagram}. % and detailed in Algorithm~\ref{alg:\ourmethod}
% \begin{algorithm}[]
%     \caption{} \label{alg:\ourmethod}
%     \begin{algorithmic}[1] 
%         \State 
%     \end{algorithmic}
% \end{algorithm}
% {Hierarchical Structure}
% For high dimensional systems, the Hamilton-Jacobi reachability approach used to filter the MPPI sample rollouts can become computationally intractable \cite{bansal2017hamilton}. Furthermore, with increasing input dimensions, the likelihood of sampling meaningful random input sequences decreases. The proposed \ourmethod approach naturally allows decoupling the MPPI and SQP approach into a hierarchical framework. For a reduced order model 
% \begin{equation*}
%     x^{\mathrm{red}}_{k+1} = f^{\mathrm{red}}(x^{\mathrm{red}}_k, u^{\mathrm{red}}_k),
% \end{equation*}
% we can use MPPI as a higher-level planner, providing a reference $\tilde{\bm{x}}_k^{\mathrm{red}}$ which can be used in the tracking formulation in \eqref{eq:LSMPC_state}, allowing the online optimization to obtain a control sequence for the full-order model. We show that this hierarchical structure can provide competitive performance in an autonomous racing example in Section~\ref{sec:simulations}.
\section{SIMULATIONS} \label{sec:simulations}
In this section, we evaluate \ourmethod~on a vehicle navigating a cluttered environment and in an autonomous racing task with random cluttered obstacles on the track.

\subsection{System Models \& Offline Computation}\label{sec:dynamics}
We consider two vehicle models as considered in \cite{carron2023chronos}. First, we consider a kinematic bicycle model, which captures the vehicle's nonholonomic motion, for the navigation task and as the reduced-order model for the racing task. Secondly, we consider a dynamic bicycle model with Pacejka tire force models which captures realistic tire slip during high-speed racing maneuvers.

The kinematic bicycle model is given as 
\begin{align} 
\dot{p}_x(t)\!=\!v(t) & \cos\left(\psi(t) \!+\!\beta(t)\right); \;\; \dot{p}_y(t) \!=\! v(t)\sin\left(\psi(t)\!+\!\beta(t)\right) \nonumber \\
& \dot{\psi}(t) = \frac{v(t)}{l_r}\sin(\beta(t)); \quad
\dot{v}(t) = a(t), \label{eq:kinematic}
\end{align}
where $\beta(t)\!=\!\arctan\!\left(\frac{l_r}{l_r + l_f}\tan(\delta(t))\!\right)$ is the slip angle and the control inputs consist of the steering angle $\delta$ and the acceleration $a$. 

We compute the HJR value functions for the cluttered navigation environment and the race track using the \texttt{hj\_reachability\_torch} library~\cite{choi2025datadriven}. Computing the value function for each environment instance takes approximately 20 minutes on an NVIDIA GeForce RTX 5090 GPU.

The full-order model with Pacejka tire forces is detailed in \cite{carron2023chronos}. We additionally augment the model with first-order actuator dynamics for the torque and the steer. This results in an eight-dimensional state, with the torque and steer rate commands serving as control inputs.

\subsection{Baselines \& Metrics}

As baselines, we compare our approach to plain MPPI, HJR-guided MPPI (Dualguard-MPPI) \cite{borquez2025dualguard}, and SQP-based MPC in \eqref{eq:MPCOCP}. 
%For plain MPPI, we include a collision penalty cost based on the signed distance function of the constraint set $c_{\mathrm{x}}(x)$. 
For plain MPPI and Dualguard-MPPI, we included a penalty based on the HJR value function as done in the original Dualguard work \cite{borquez2025dualguard}. Computing the HJR value function for the eight-dimensional full-order model is computationally prohibitive with current computational resources. For the racing task, Dualguard-MPPI is not included in the reported results since its reduced-order-based implementation performed poorly under full-order simulation compared to other methods. For MPCs, the constraints are implemented as soft constraints, which are solved with the SQP solver \cite{verschueren2022acados}. If a feasible solution is not found, we use the solution from the previous control loop as a backup solution. The collision rate is given as the percentage of episodes with at least one collision with the environment. The success rate is defined as the percentage of collision-free episodes reaching the goal in the cluttered environment or completing a lap in the racing task before a 20s timeout. 

\subsection{Experiment 1: Navigation in Cluttered Environment}
The first simulation study considers the kinematic bicycle model in a planar navigation setting. We use the cluttered environment from \cite{borquez2025dualguard} as depicted in Figure~\ref{fig:cluttered}. For 100 randomly generated start and goal points, the goal is to minimize the following cost function, consisting of a tracking cost with respect to the goal position $p_{\mathrm{goal}}$ as well as a penalty on steering and acceleration of the system,
\begin{equation*}
    \mathcal{L}(\bm{x},\bm{u}) = \sum_{k=0}^\infty \Vert p_{x, k} - p_{x,\mathrm{goal}}\Vert^2 + \Vert p_{y, k} - p_{y,\mathrm{goal}}\Vert^2 + 2\Vert u_k \Vert^2.
\end{equation*}
Each episode is terminated if the goal is reached within $0.1$m. For all MPPI implementations, we use 100 samples and a prediction horizon of 50 time steps. 
\begin{figure}[]
    \centering
    \includegraphics[width=0.49\linewidth]{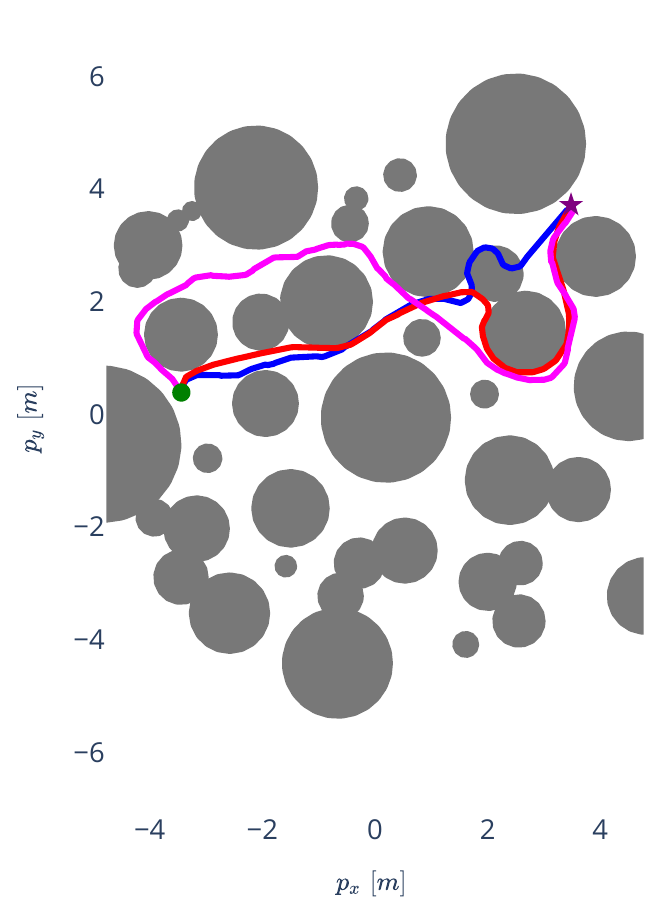}
    \includegraphics[width=0.49\linewidth]{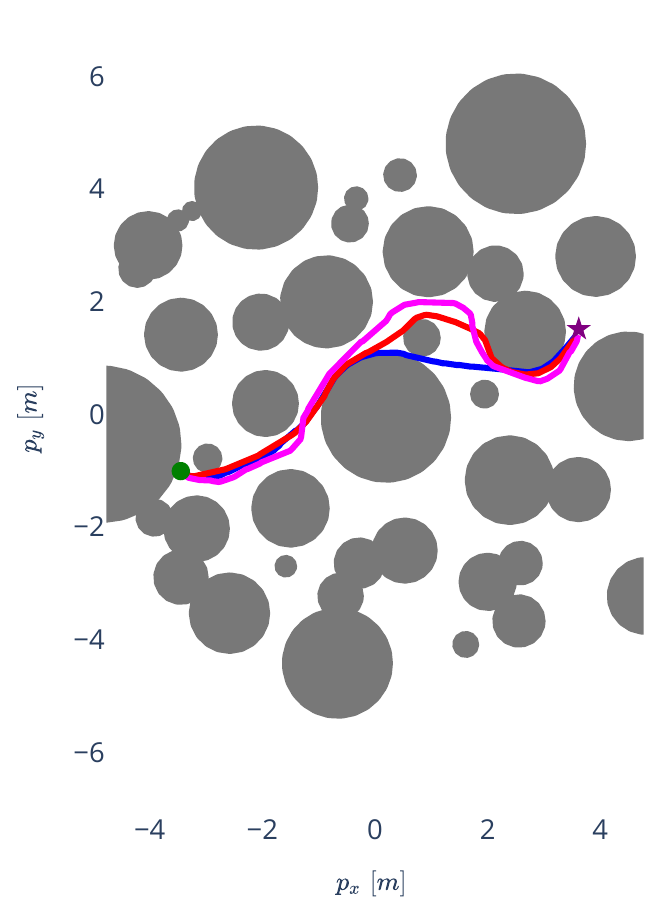}

    \includegraphics[width=0.9\linewidth]{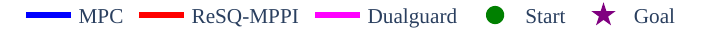}
    \vspace{-0.2cm}
    \caption{Two examples of simulated episodes. MPC suffers from local optima, resulting in collisions with the obstacles, while the HJR-based safety filter can lead to performance agnostic actions (left). The proposed SQP can improve performance compared of obtained MPPI solutions.}
    \label{fig:cluttered}
    \vspace{-0.5cm}
\end{figure}
\begin{table}[]
    \vspace{0.4cm}
    \centering
    \caption{Cluttered environment comparison}
    \begin{tabular}{cccc}
         &Avg. Cost&Success [\%]&Collisions [\%]\\
       \hline MPC (SQP) &  1.039 $\pm$ 0.089 & 26 & 71  \\
       \hline \makecell{Dualguard \\ MPPI \cite{borquez2025dualguard}} & 1.554 $\pm$ 0.124 & 91 & 0 \\
       \hline \makecell{\textbf{\ourmethod} \\ (ref. + ws. + nom.)} & 1.0 $\pm$ 0.102 & 96 & 0
    \end{tabular}
    \vspace{-0.5cm}
    \label{tab:cluttered}
\end{table}
The normalized, average incurred cost and standard error of jointly successful episodes of the proposed \ourmethod\ with the reference, warmstart and nominal interactions described in Section~\ref{sec:method} can be found in Table~\ref{tab:cluttered}. As the obstacles are directly integrated as non-convex quadratic constraints in the optimization problem, we observe that MPC struggles with local optima and numerical infeasibility, resulting in a large amount of collisions.
While Dualguard-MPPI remains safe throughout all episodes, the success rate is lower compared to our approach while incurring a substantially higher cost.
In comparison, we observe that \ourmethod\ alleviates the local optima issues of MPC by warmstarting the optimization problem with the MPPI solution, achieving the highest success rate while having no obstacle collisions, and achieving the lowest cost due to the additional optimization step by leveraging the strength of all three elements in Section~\ref{sec:method}. Two examples of simulated episodes are provided in Figure~\ref{fig:cluttered}.

\subsection{Experiment 2: Racing with obstacles}\label{sec:racing}
The second simulation study is performed on autonomous racing on a track with random obstacles. The aim is to achieve the lowest possible lap time, while avoiding collision. We generate 50 random configurations of obstacles, an example of which is depicted in Figure~\ref{fig:racing}. In order to approximate the time-optimal objective, we employ a contouring control formulation \cite{liniger2015optimization}. Instead of Cartesian position and global heading coordinates as described in \eqref{eq:kinematic}, a curvilinear parametrization is employed, capturing the progress $s$, lateral deviation $e$ and relative angle $\mu$ along the centerline of the considered track as derived similarly in \cite{liniger2015optimization}. as
\begin{align*}\label{eq:curvilinear}
    \dot{s}(t) & = \frac{v_x(t)\cos(\mu(t)) - v_y(t)\sin(\mu(t))}{1-e(t)\kappa(s(t))}, \\
    \dot{e}(t) &= v_x(t)\sin(\mu(t)) + v_y(t)\cos(\mu(t)), \\
    \dot{\mu}(t) &= \omega(t) - \kappa(s(t))\dot{s}(t), 
\end{align*}
where $\kappa(s)$ denotes the curvature of the centerline. The generated obstacles are implemented as quadratic in this curvilinear representation. 

The cost considered in the MPPI and MPC formulations is then given as
\begin{equation*}
    \mathcal{L}(\bm{x}_k,\bm{u}_k)\!=\! \sum_{i=0}^{N-1} q_1\Vert s_{i|k}\Vert^2 + q_2\Vert \mu_{i|k}\Vert^2 + q_3\Vert e_{i|k}\Vert^2 + r\Vert u_{i|k} \Vert^2.
\end{equation*}
%where $s$ is XX, and $e$ is XX.

We run simulations both for the kinematic model as well as the dynamic model with Pacejka tire forces as described in Section~\ref{sec:dynamics}. The lap times as well as obstacle collisions for the kinematic model are reported in Table~\ref{tab:kinematic} with 60 MPPI samples and a prediction horizon of 1s. In this setting, MPPI struggles to sample the bang-bang torque control required for a fast lap time in the racing application. This also impacts the \ourmethod\ controllers, where the reference is used in the cost formulation. We find that using the MPPI as warmstart and looping back the SQP solution as the nominal controller results in the fast lap times while keeping the system safe from collisions. 
An additional benefit of the proposed approach is that with the guidance of HJR certificate, we can significantly reduce the prediction horizon of the MPPI and SQP formulations, while preserving similar performance, as shown in Table~\ref{tab:prediction}.

For the eight-dimensional dynamic bicycle model, we leverage the flexibility of our approach, using the HJR value function of the reduced-order model as an additional cost in the full-order MPPI sampling. The lap times and failure rates are reported in Table~\ref{tab:dynamic}, where the proposed \ourmethod\ with all three interactions as proposed in Section~\ref{sec:method} demonstrates the fastest lap time while remaining safe across all episodes, failing only a single episode due to simulation timeout. In comparison, MPPI on its own struggles to find solutions due to the low number of samples and either collides or times out in all episodes. MPC, while fast lap times, incurs a lot of collisions in this setting.

% For the dynamic bicycle model, the number of states is too large to compute the HJR-based value function in a reasonable amount of time and we can therefore not use the safe policy to filter the MPPI dynamics rollout. This also renders the Dualguard \cite{borquez2025dualguard} baseline not feasible for this simulation. Instead, we leverage the flexibility of the proposed approach, using the value function of the reduced-order model as an additional cost in the full-order MPPI sampling. The lap times and failure rates are reported in Table~\ref{tab:prediction}, where the proposed \ourmethod\ with all three interactions as proposed in Section~\ref{sec:method} demonstrates the fastest lap time while remaining safe across all episodes, failing only a single episode due to simulation timeout. In comparison, MPPI on its own struggles to find solutions due to the low number of samples and either collides or times out in all episodes. MPC, while fast incurs a lot of collisions in this setting.

\begin{figure}[!t]
    \vspace{0.3cm}
    \centering
    \includegraphics[width=0.48\linewidth]{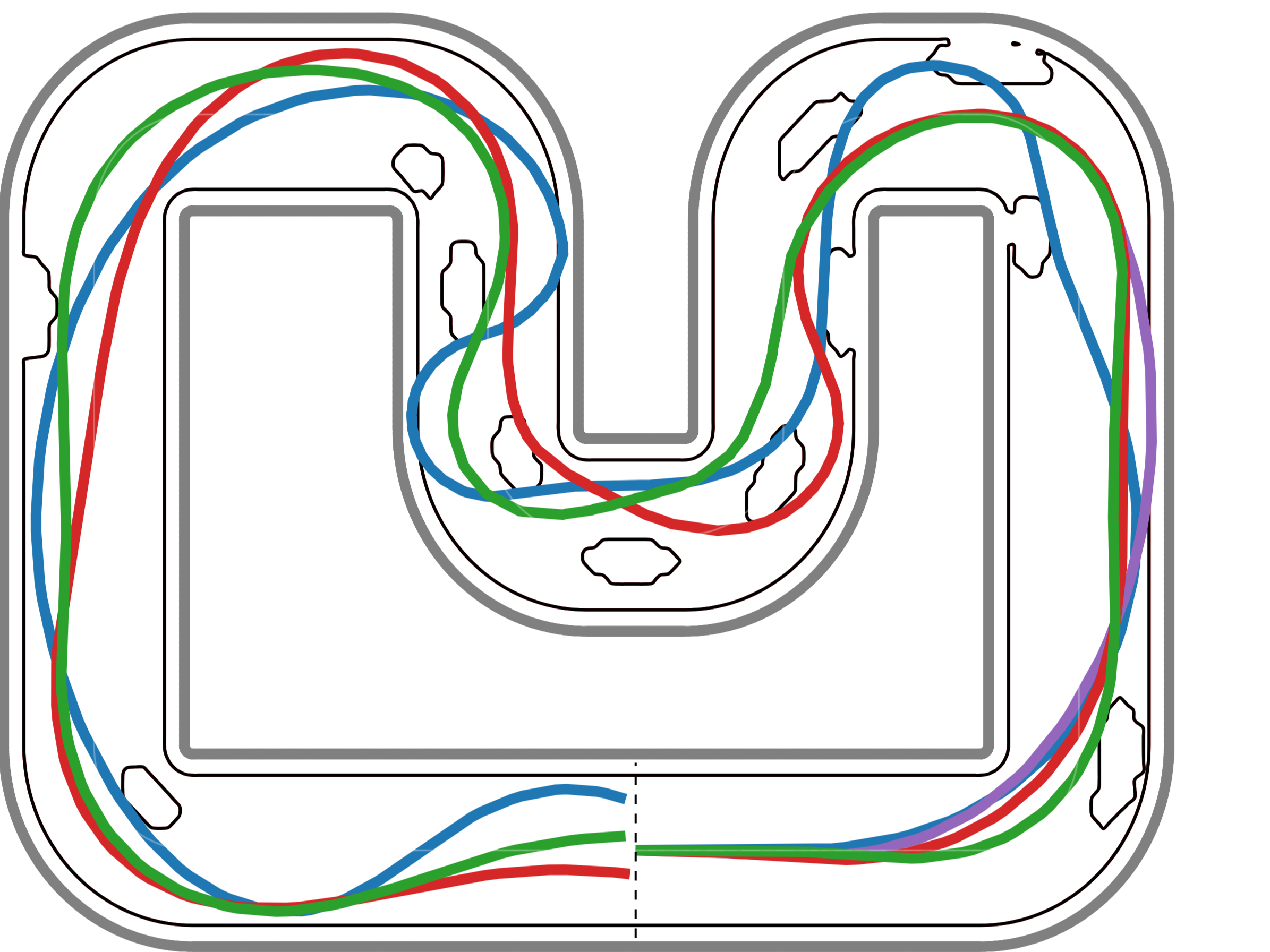}
    \includegraphics[width=0.48\linewidth]{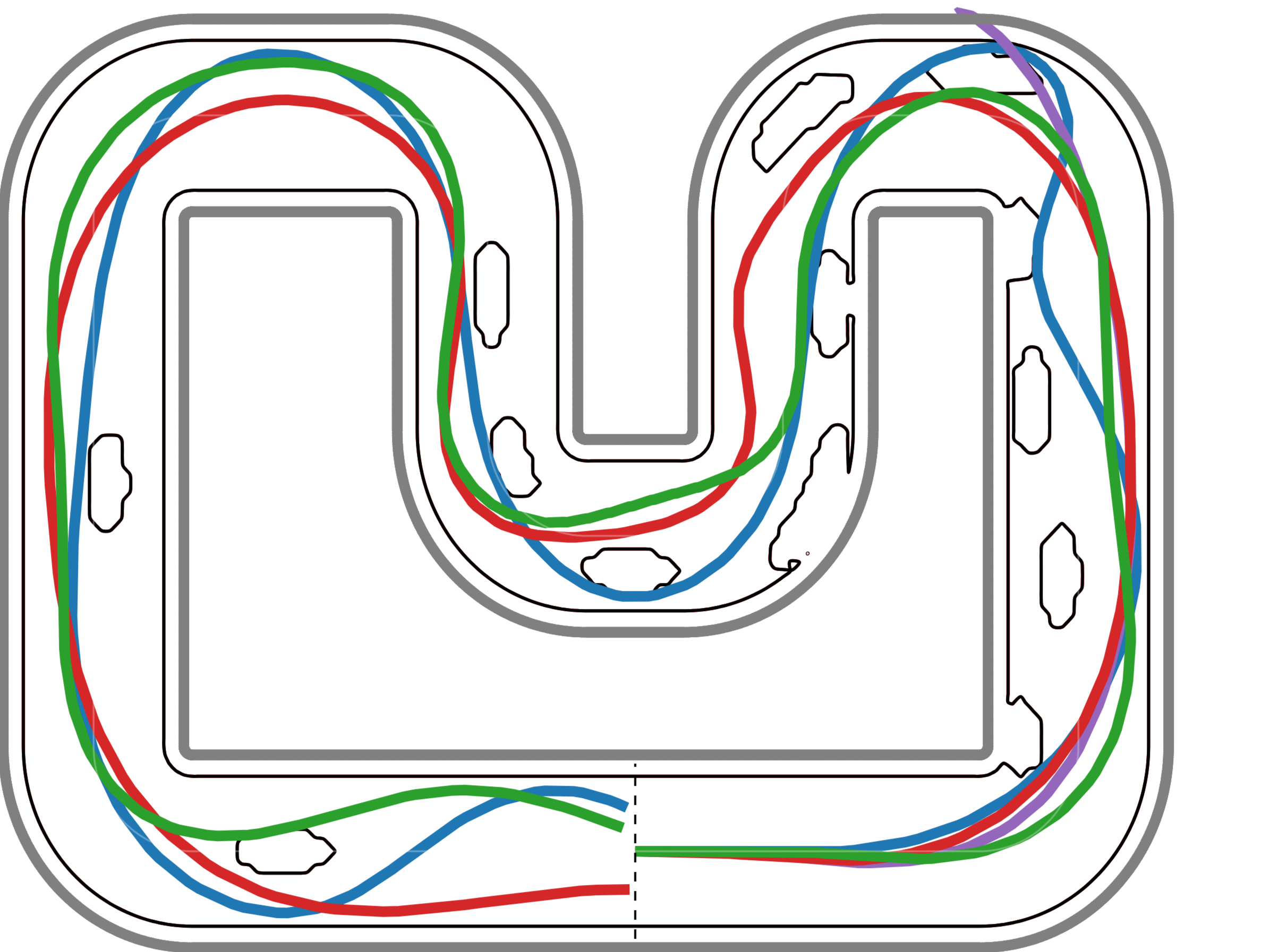} \\
    \vspace{0.2cm}
    \includegraphics[width=0.48\linewidth]{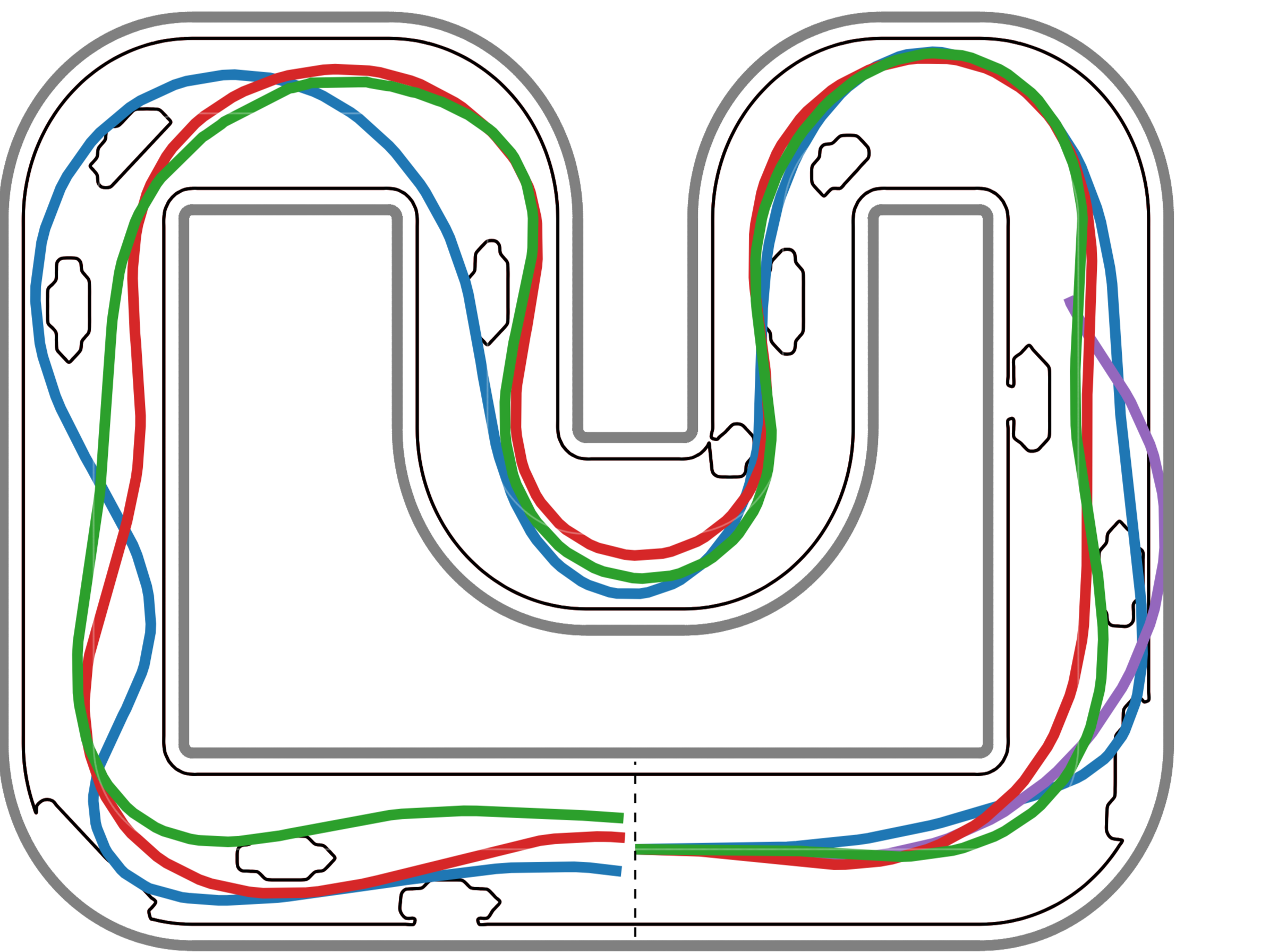}
    \includegraphics[width=0.48\linewidth]{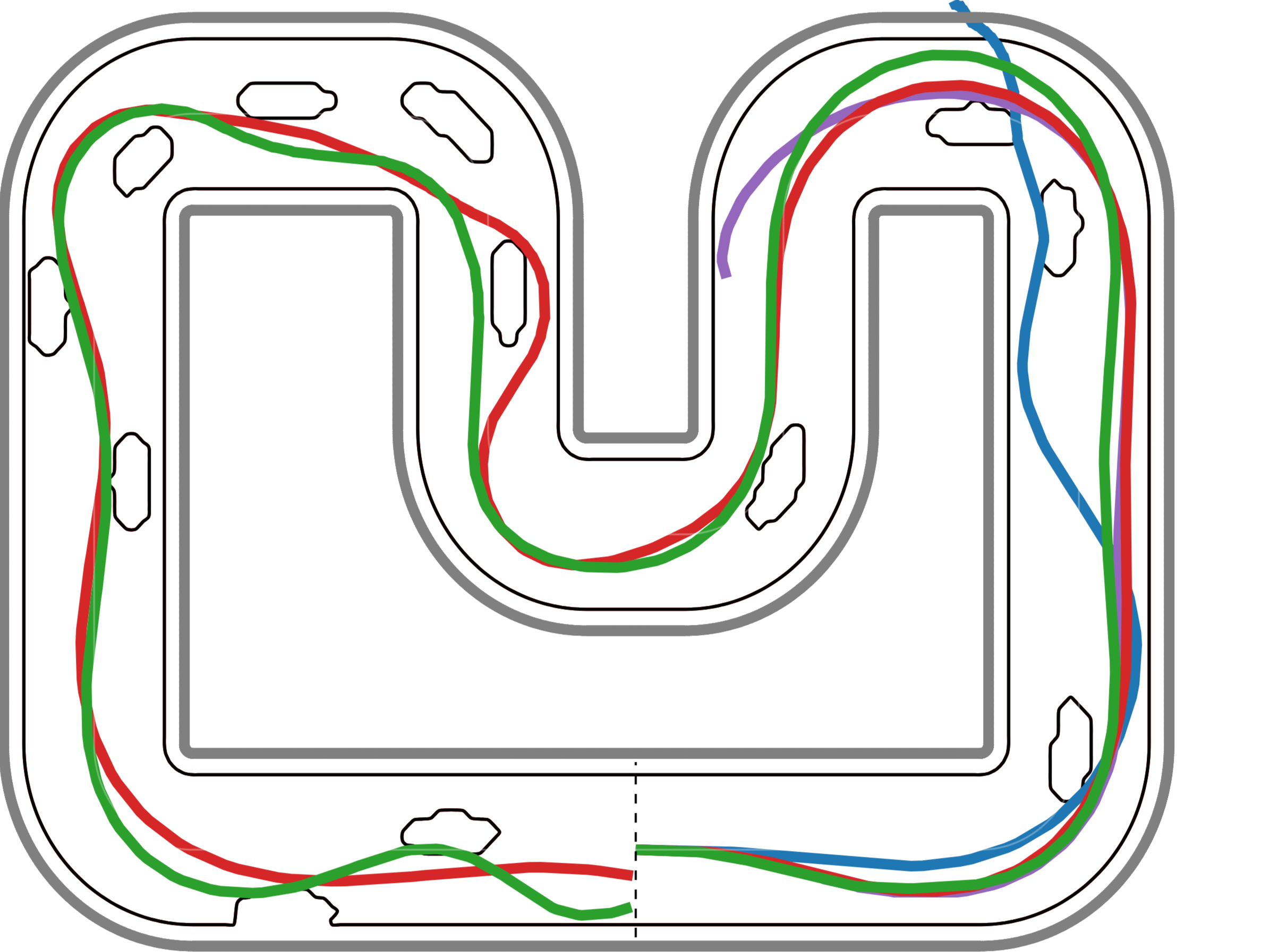} \\
    \includegraphics[width=0.98\linewidth]{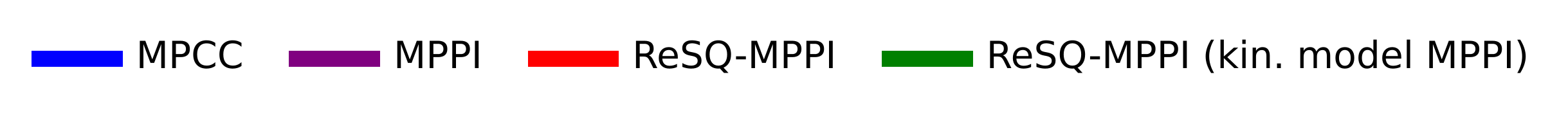}
    \vspace{-0.5cm}
    \caption{Closed-loop trajectories for the dynamic bicycle model with Pacejka tire forces on the considered track for four random obstacle configuration. The HJR value function level set computed for the kinematic model capturing the track boundaries and the obstacles is depicted (\textit{black}).}
    \label{fig:racing}
    \vspace{-0.1cm}
\end{figure}
\begin{table}[]
    \centering
    \caption{Kinematic Model Racing with Obstacles Comparison}
    \begin{tabular}{cccc}
     & Avg. Lap Time [s] & Success [\%] & Collision [\%] \\
     \hline MPC (SQP) & 4.274 $\pm$ 0.020 & 32 & 68 \\
     \hline Dualguard \cite{borquez2025dualguard} & 5.831 $\pm$ 0.210 & 96 & 4 \\
     \hline \makecell{\textbf{\ourmethod} \\ (ws. + nom.)} & 3.884 $\pm$ 0.012& 100 &
0    \end{tabular}
    \label{tab:kinematic}
    \vspace{-0.2cm}
\end{table}
\begin{table}[!t]
    \vspace{0.2cm}
    \centering
    \caption{Kinematic Model \ourmethod\ Prediction Horizon Study}
    \vspace{-0.2cm}
    \begin{tabular}{cccc}
        Pred. Hor. [s] & Avg. Lap Time [s] & Success [\%] & Collision [\%] \\
        \hline 1 & 3.867 $\pm$ 0.010 & 100 & 0 \\
        \hline0.5 & 3.851 $\pm$ 0.010 & 98 & 2 \\
        \hline0.4 & 3.870 $\pm$ 0.009 & 100 & 0 \\
        \hline0.3 & 4.126 $\pm$ 0.012 & 100 & 0 \\
        \hline0.2 & 7.144 $\pm$ 0.028 & 100 & 0 \\
        \hline0.1 & 14.996 $\pm$ 0.106 & 62 & 38
    \end{tabular}
    \label{tab:prediction}
    \vspace{-0.2cm}
\end{table}

\begin{table}[t]
    \centering
    \caption{Dynamic Pacejka Model Racing with Obstacles Comparison}
    \vspace{-0.2cm}
    \begin{tabular}{cccc}
         & Avg. Lap Time [s] & Succ. [\%] & Coll. [\%] \\
         \hline MPC (SQP) & 7.629 $\pm$ 0.124 & 42 & 58 \\
         \hline MPPI & - & 0 & 38 \\
         \hline \makecell{\textbf{\ourmethod} \\ (ref. + ws. + nom.)}  & 7.546 $\pm$ 0.198 & 98 & 0 \\
         % \hline \hline \makecell{\textbf{\ourmethod} \\ (kin. model MPPI)} & 8.668 $\pm$ 0.172 & 90 & 10 
    \end{tabular}
    \label{tab:dynamic}
    \vspace{-0.6cm}
\end{table}
% \begin{itemize}
%     \item racing application with obstacles; side benefit of reducing the prediction horizon
%     \item racing application results, discussion on reduced and high order model
% \end{itemize}

\subsection{Real-time implementation}
We test the proposed \ourmethod\ approach on a real-time simulation of the Chronos miniature race car in \cite{carron2023chronos} using ROS2 \cite{ros2} for one of the seeds in the racing simulation in Section~\ref{sec:racing}. We test the proposed \ourmethod\ approach with 40 samples with a prediction horizon of 40 time steps and 3 SQP iterations, running at 40Hz on a Intel i9-13900KS CPU. The average computation time of \ourmethod\ over 1000 time steps is 20.37ms, with a maximum time of 28.33ms. In comparison, MPPI alone with the HJR value function-based safety cost requires on average 8.58ms with a maximum time of 11.53ms and MPC with 3 SQP iterations averages 1.06ms with a maximum of 1.79ms. While MPPI and MPC both collide in the real-time simulation, \ourmethod\ has an average 5.9s lap time over 10 consecutive laps without collision.  

\section{CONCLUSIONS}
We propose \ourmethod, a control framework leveraging Hamilton-Jacobi reachability for safe reduced-order model-based sample generation in MPPI, whose optimal control sequence is enhanced for safety and performance through sequential quadratic programming iterations. We leverage the MPPI solution as a reference in a tracking MPC formulation and as a warmstart for the SQP iterations, feeding back the optimal SQP solution as the nominal MPPI input. We show that the proposed approach can alleviate local optimality issues in nonlinear MPC formulations as well as improve the safety and performance of MPPI in extensive simulation studies, including navigation in a cluttered environment as well as racing with static obstacles. The results are additionally validated in a real-time vehicle simulation.

%\addtolength{\textheight}{-12cm}   % This command serves to balance the column lengths
                                  % on the last page of the document manually. It shortens
                                  % the textheight of the last page by a suitable amount.
                                  % This command does not take effect until the next page
                                  % so it should come on the page before the last. Make
                                  % sure that you do not shorten the textheight too much.

%%%%%%%%%%%%%%%%%%%%%%%%%%%%%%%%%%%%%%%%%%%%%%%%%%%%%%%%%%%%%%%%%%%%%%%%%%%%%%%%

%%%%%%%%%%%%%%%%%%%%%%%%%%%%%%%%%%%%%%%%%%%%%%%%%%%%%%%%%%%%%%%%%%%%%%%%%%%%%%%%

%%%%%%%%%%%%%%%%%%%%%%%%%%%%%%%%%%%%%%%%%%%%%%%%%%%%%%%%%%%%%%%%%%%%%%%%%%%%%%%%
%\section*{APPENDIX}

%Appendixes should appear before the acknowledgment.

\section*{ACKNOWLEDGMENT}

This work was supported in part by the NSF Safe Learning Enabled Systems Program, and the Strategic Hub for International Research Collaboration project of Seoul National University. We thank Javier Borquez for sharing the source code of DualGuard-MPPI and valuable insights for the development of this work.

\bibliographystyle{IEEEtran}
\bibliography{references}

@inproceedings{mizuta2026cfmmppi,
  title={Unified Generation-Refinement Planning: Bridging Guided Flow Matching and Sampling-Based MPC for Social Navigation},
  author={Mizuta, Kazuki and Leung, Karen},
  booktitle={IEEE International Conference on Robotics and Automation (ICRA)},
  year={2026}
}

@inproceedings{carron2023chronos,
  title={Chronos and CRS: Design of a miniature car-like robot and a software framework for single and multi-agent robotics and control},
  author={Carron, Andrea and Bodmer, Sabrina and Vogel, Lukas and Zurbr{\"u}gg, Ren{\'e} and Helm, David and Rickenbach, Rahel and Muntwiler, Simon and Sieber, Jerome and Zeilinger, Melanie N},
  booktitle={IEEE International Conference on Robotics and Automation (ICRA)},
  year={2023},
  organization={IEEE}
}

@article{wabersich2021predictive,
  title={A predictive safety filter for learning-based control of constrained nonlinear dynamical systems},
  author={Wabersich, Kim Peter and Zeilinger, Melanie N},
  journal={Automatica},
  volume={129},
  pages={109597},
  year={2021},
  publisher={Elsevier}
}

@article{homburger2025optimality,
  title={Optimality and suboptimality of MPPI control in stochastic and deterministic settings},
  author={Homburger, Hannes and Messerer, Florian and Diehl, Moritz and Reuter, Johannes},
  journal={IEEE Control Systems Letters},
  year={2025},
  publisher={IEEE}
}

@article{williams2017model,
  title={Model predictive path integral control: From theory to parallel computation},
  author={Williams, Grady and Aldrich, Andrew and Theodorou, Evangelos A},
  journal={Journal of Guidance, Control, and Dynamics},
  volume={40},
  number={2},
  pages={344--357},
  year={2017},
  publisher={American Institute of Aeronautics and Astronautics}
}

@inproceedings{williams2017information,
  title={Information theoretic MPC for model-based reinforcement learning},
  author={Williams, Grady and Wagener, Nolan and Goldfain, Brian and Drews, Paul and Rehg, James M and Boots, Byron and Theodorou, Evangelos A},
  booktitle={2017 IEEE international conference on robotics and automation (ICRA)},
  pages={1714--1721},
  year={2017},
  organization={IEEE}
}

@article{yin2023shield,
  title={Shield model predictive path integral: A computationally efficient robust mpc method using control barrier functions},
  author={Yin, Ji and Dawson, Charles and Fan, Chuchu and Tsiotras, Panagiotis},
  journal={IEEE Robotics and Automation Letters},
  volume={8},
  number={11},
  pages={7106--7113},
  year={2023},
  publisher={IEEE}
}

@article{borquez2025dualguard,
  title={Dualguard mppi: Safe and performant optimal control by combining sampling-based mpc and hamilton-jacobi reachability},
  author={Borquez, Javier and Raus, Luke and Ciftci, Yusuf Umut and Bansal, Somil},
  journal={IEEE Robotics and Automation Letters},
  year={2025},
  publisher={IEEE}
}

@book{rawlings2017,
	address = {Madison, Wisconsin},
	edition = {2nd edition},
	title = {{Model predictive control: theory, computation, and design}},
	isbn = {978-0-9759377-3-0},
	shorttitle = {Model predictive control},
	language = {en},
	publisher = {Nob Hill Publishing},
	author = {Rawlings, J. B. and Mayne, D. Q. and Diehl, M.},
	year = {2017}
	}

@article{verschueren2022acados,
  title={acados—a modular open-source framework for fast embedded optimal control},
  author={Verschueren, Robin and Frison, Gianluca and Kouzoupis, Dimitris and Frey, Jonathan and Duijkeren, Niels van and Zanelli, Andrea and Novoselnik, Branimir and Albin, Thivaharan and Quirynen, Rien and Diehl, Moritz},
  journal={Mathematical Programming Computation},
  volume={14},
  number={1},
  pages={147--183},
  year={2022},
  publisher={Springer}
}

@article{boggs1995sequential,
  title={Sequential quadratic programming},
  author={Boggs, Paul T and Tolle, Jon W},
  journal={Acta numerica},
  volume={4},
  pages={1--51},
  year={1995},
  publisher={Cambridge University Press}
}

@article{fazlyab2026model,
  title={Model Predictive Path Integral Control as Preconditioned Gradient Descent},
  author={Fazlyab, Mahyar and Sharifi, Sina and Wang, Jiarui},
  journal={arXiv preprint arXiv:2603.24489},
  year={2026}
}

@inproceedings{bansal2017hamilton,
  title={Hamilton-jacobi reachability: A brief overview and recent advances},
  author={Bansal, Somil and Chen, Mo and Herbert, Sylvia and Tomlin, Claire J},
  booktitle={2017 IEEE 56th annual conference on decision and control (CDC)},
  pages={2242--2253},
  year={2017},
  organization={IEEE}
}

@article{liniger2015optimization,
  title={Optimization-based autonomous racing of 1: 43 scale RC cars},
  author={Liniger, Alexander and Domahidi, Alexander and Morari, Manfred},
  journal={Optimal Control Applications and Methods},
  volume={36},
  number={5},
  pages={628--647},
  year={2015},
  publisher={Wiley Online Library}
}

@inproceedings{fisac2019bridging,
  title={Bridging hamilton-jacobi safety analysis and reinforcement learning},
  author={Fisac, Jaime F and Lugovoy, Neil F and Rubies-Royo, Vicen{\c{c}} and Ghosh, Shromona and Tomlin, Claire J},
  booktitle={IEEE International Conference on Robotics and Automation (ICRA)},
  pages={8550--8556},
  year={2019},
  organization={IEEE}
}

@inproceedings{yin2025safe,
  title={Safe beyond the horizon: Efficient sampling-based mpc with neural control barrier functions},
  author={Yin, Ji and So, Oswin and Yu, Eric Yang and Fan, Chuchu and Tsiotras, Panagiotis},
  booktitle={Robotics: Science and Systems},
  year={2025}
}

@inproceedings{tao2022control,
  title={Control barrier function augmentation in sampling-based control algorithm for sample efficiency},
  author={Tao, Chuyuan and Kim, Hunmin and Yoon, Hyungjin and Hovakimyan, Naira and Voulgaris, Petros},
  booktitle={American Control Conference (ACC)},
  pages={3488--3493},
  year={2022},
  organization={IEEE}
}

@inproceedings{yin2024chance,
  title={Chance-constrained information-theoretic stochastic model predictive control with safety shielding},
  author={Yin, Ji and Tsiotras, Panagiotis and Berntorp, Karl},
  booktitle={2024 IEEE 63rd Conference on Decision and Control (CDC)},
  pages={653--658},
  year={2024},
  organization={IEEE}
}

@inproceedings{di2018dynamic,
  title={Dynamic locomotion in the mit cheetah 3 through convex model-predictive control},
  author={Di Carlo, Jared and Wensing, Patrick M and Katz, Benjamin and Bledt, Gerardo and Kim, Sangbae},
  booktitle={IEEE/RSJ international conference on intelligent robots and systems (IROS)},
  year={2018},
  organization={IEEE}
}

@article{pankert2020perceptive,
  title={Perceptive model predictive control for continuous mobile manipulation},
  author={Pankert, Johannes and Hutter, Marco},
  journal={IEEE Robotics and Automation Letters},
  volume={5},
  number={4},
  year={2020},
  publisher={IEEE}
}

@article{williams2018information,
  title={Information-theoretic model predictive control: Theory and applications to autonomous driving},
  author={Williams, Grady and Drews, Paul and Goldfain, Brian and Rehg, James M and Theodorou, Evangelos A},
  journal={IEEE Transactions on Robotics},
  volume={34},
  number={6},
  pages={1603--1622},
  year={2018},
  publisher={IEEE}
}

@article{rawlik2012stochastic,
  title={On stochastic optimal control and reinforcement learning by approximate inference},
  author={Rawlik, Konrad and Toussaint, Marc and Vijayakumar, Sethu},
  journal={Robotics: Science and Systems},
  year={2012}
}

@misc{wang2026handbook,
  title = {Handbook on Safety Certificates: Techniques from Hamilton--Jacobi Reachability Analysis and Control Barrier Functions},
  author = {Hao Wang and Max Cohen and Zeyuan Feng and David Oh and Oswin So and Songyuan Zhang and Javier Borquez and Chuchu Fan and Jaime Fisac and Haimin Hu and Somil Bansal and Jason J. Choi},
  year = {2026},
  note = {Preprint}
}

@article{CohenARC24,
  author          = {M. H. Cohen and T. G. Molnar and A. D. Ames},
  journal         = {Annual Reviews in Control},
  number          = {},
  title           = {Safety-Critical Control for Autonomous Systems: Control Barrier Functions via Reduced Order Models},
  volume          = {57},
  year            = {2024},
  pages = {100947},
}

@inproceedings{Ames2019,
author  = {Ames, Aaron D. and Coogan, Samuel and Egerstedt, Magnus and Notomista, Gennaro and Sreenath, Koushil and Tabuada, Paulo},
booktitle = procECC,
title   = {Control Barrier Functions: Theory and Applications},
year    = {2019},
pages   = {3420-3431},
}

@article{feng2025bridging,
  title={Bridging model predictive control and deep learning for scalable reachability analysis},
  author={Feng, Zeyuan and Qiu, Le and Bansal, Somil},
  journal={Robotics: Science and Systems},
  year={2025}
}

@inproceedings{lin2024verification,
  title={Verification of neural reachable tubes via scenario optimization and conformal prediction},
  author={Lin, Albert and Bansal, Somil},
  booktitle={6th Annual Learning for Dynamics \& Control Conference},
  pages={719--731},
  year={2024},
  organization={PMLR}
}

@INPROCEEDINGS{choi2025datadriven,
  author={Choi, Jason J. and Strong, Christopher A. and Sreenath, Koushil and Cho, Namhoon and Tomlin, Claire J.},
  booktitle=procCDC, 
  title={Data-Driven Hamiltonian for Direct Construction of Safe Set from Trajectory Data}, 
  year={2025}
 }

@article{
    ros2,
    author = {Steven Macenski  and Tully Foote  and Brian Gerkey  and Chris Lalancette  and William Woodall },
    title = {Robot Operating System 2: Design, architecture, and uses in the wild},
    journal = {Science Robotics},
    volume = {7},
    number = {66},
    pages = {eabm6074},
    year = {2022},
    doi = {10.1126/scirobotics.abm6074},
    URL = {https://www.science.org/doi/abs/10.1126/scirobotics.abm6074}
}

\appendix
We provide an ablation study of the different interconnections of MPPI and MPC discussed in Section~\ref{sec:method} for the simulations in Section~\ref{sec:simulations} in Tables~\ref{tab:cluttered_ablation},~\ref{tab:kin_ablation}~and~\ref{tab:dyn_ablation}.
\begin{table}[h]
\vspace{-0.5em}
\centering
\caption{Cluttered Environment Ablation Study}
\vspace{-0.5em}
\begin{tabular}{cccc}
 & Avg. Cost & Succ. [\%] & Coll. [\%] \\ \hline
MPC (SQP)      & $1.039 \pm 0.089$ & 26 & 71 \\ \hline
Dualguard MPPI \cite{borquez2025dualguard}     & $1.554 \pm 0.124$ & 91 & \textbf{0}  \\ \hline
\makecell{\textbf{\ourmethod} \\ (ws. + ref. + nom.)}      & $1.000 \pm 0.102$ & \textbf{96} & \textbf{0}  \\ \hline
\makecell{\ourmethod \\ (ref.)} & $0.978 \pm 0.120$ & 43 & 55 \\ \hline
\makecell{\ourmethod \\ (ref. + nom.)}    & $1.150 \pm 0.105$ & 63 & 34 \\ \hline
\makecell{\ourmethod \\ (ws.)} & $\textbf{0.946} \pm 0.104$ & 82 & 15 \\ \hline
\makecell{\ourmethod \\ (ws. + nom.)}     & $0.955 \pm 0.100$ & 85 & 9  \\ \hline
\makecell{\ourmethod \\ (ws. + ref.)}     & $1.043 \pm 0.115$ & 51 & 43 \\
\end{tabular}
\label{tab:cluttered_ablation}
\vspace{-1em}
\end{table}
\begin{table}[h]
\centering
\vspace{-0.8em}
\caption{Kinematic Model Ablation Study}
\vspace{-0.5em}
\begin{tabular}{cccc}
 & Avg. Lap Time [s] & Succ. [\%] & Coll. [\%] \\ \hline
MPC (SQP)      & $4.274 \pm 0.020$ & 32  & 68 \\ \hline
Dualguard MPPI \cite{borquez2025dualguard}      & $5.831 \pm 0.210$ & 96  & 4  \\ \hline
  \makecell{\ourmethod \\ (ws. + ref. + nom.)}    & $5.298 \pm 0.146$ & 100 & 0  \\ \hline
\makecell{\ourmethod \\ (ref.)} & $5.362 \pm 0.142$ & 100 & 0  \\ \hline
\makecell{\ourmethod \\ (ref. + nom.)}    & $5.282 \pm 0.149$ & 98  & 2  \\ \hline
\makecell{\ourmethod \\ (ws.)} & $\textbf{3.867} \pm 0.010$ & 100 & 0  \\ \hline
\makecell{\textbf{\ourmethod} \\ (ws. + nom.)}     & $3.884 \pm 0.012$ & 100 & 0  \\ \hline
\makecell{\ourmethod \\ (ws. + ref.)}     & $5.347 \pm 0.139$ & 100 & 0  \\ 
\end{tabular}
\label{tab:kin_ablation}
\vspace{-1em}
\end{table}
\begin{table}[h]
\vspace{-0.8em}
\centering
\caption{Dynamic Model Ablation Study}
\vspace{-0.5em}
\begin{tabular}{cccc}
 & Avg. Lap Time [s] & Succ. [\%] & Coll. [\%] \\ \hline
MPC (SQP)      & $7.629 \pm 0.124$ & 42 & 58 \\ \hline
MPPI      & -                 & 0  & 38 \\  \hline
\makecell{\textbf{\ourmethod} \\ (ws. + ref. + nom.)}      & $7.546 \pm 0.198$ & \textbf{98} & \textbf{0}  \\ \hline
\makecell{\ourmethod \\ (ref.)} & $7.245 \pm 0.185$ & 72 & 26 \\ \hline
\makecell{\ourmethod \\ (ref. + nom.)}    & $7.204 \pm 0.143$ & 92 & 8  \\ \hline
\makecell{\ourmethod \\ (ws.)} & $\textbf{7.177} \pm 0.106$ & 46 & 46 \\ \hline
\makecell{\ourmethod \\ (ws. + nom.)}     & $7.572 \pm 0.167$ & 58 & 42 \\ \hline
\makecell{\ourmethod \\ (ws. + ref.)}     & $7.511 \pm 0.204$ & 82 & 8  \\ \hline
\makecell{\ourmethod \\ (kin. model MPPI)} & $8.668 \pm 0.172$ & 90 & 10 \\
\end{tabular}
\label{tab:dyn_ablation}
\vspace{-1em}
\end{table}

\end{document}